\def\BibTeX{{\rm B\kern-.05em{\sc i\kern-.025em b}\kern-.08em
    T\kern-.1667em\lower.7ex\hbox{E}\kern-.125emX}}
\newtheorem{lemma}{Lemma}
\newtheorem{theorem}{Theorem}
\newtheorem{definition}{Definition}
\newcommand{\vpnote}[1]%
    {\textcolor{cyan}{\textbf{VP: #1}}}
\newcommand{\asnote}[1]%
    {\textcolor{orange}{\textbf{AS: #1}}}
\begin{document}

\title{Exponentially Stable First Order Control on Matrix Lie Groups\\
}

\author{Valmik Prabhu*, Amay Saxena*, and S. Shankar Sastry%
\thanks{
Valmik Prabhu is with the department of Mechanical Engineering, and Amay Saxena and S. Shankar Sastry are with the department of Electrical Engineering and Computer Science at UC Berkeley. \newline
Corresponding Author: \href{mailto:valmik@berkeley.edu}{\tt \small{valmik@berkeley.edu}}
\newline
$^*$ indicates equal contribution.
}
}

\maketitle

\begin{abstract}

We present a novel first order controller for systems evolving on matrix Lie groups, a major use case of which is Cartesian velocity control on robot manipulators. This controller achieves global exponential trajectory tracking on a number of commonly used Lie groups including the Special Orthogonal Group SO(n), the Special Euclidean Group SE(n), and the General Linear Group over complex numbers GL(n, C). Additionally, this controller achieves local exponential trajectory tracking on all matrix Lie groups. We demonstrate the effectiveness of this controller in simulation on a number of different Lie groups as well as on hardware with a 7-DOF Sawyer robot arm.

\end{abstract}


\section{Introduction}



A number of robot control tasks such as welding, painting, and part alignment benefit when the robot's trajectory is defined in terms of its end effector pose rather than its joint angles. This is called \textit{Cartesian control}. The robot's end effector is able to translate and rotate in 3D space, and thus it's pose evolves on $SE(3)$, the set of rigid body transformations in three dimensions. $SE(3)$ is a \textit{Lie group}, a smooth manifold that possesses additional algebraic and geometric structure. 

While there has been much prior work in this area, most Cartesian controllers use local coordinate parameterizations of the end effector rotation, such as Euler angles. Local parameterizations exhibit singularities such as gimbal lock, and therefore these coordinates are only valid for a local set of end effector configurations. To our knowledge, no one has yet proven a globally stable Cartesian velocity controller.

In this paper, we present a first order tracking controller for fully actuated systems that evolve on matrix Lie groups. This controller exhibits global exponential tracking on a number of Lie groups, including $SE(3)$ and $SO(3)$, which allows us to define globally exponentially stable Cartesian controllers for robots. In addition, our control law also provides local exponential stability on all matrix Lie groups.

In Section \ref{sec:related_work} we discuss prior work in Cartesian control on robots and geometric control on Lie groups, and in Section \ref{sec:math_background} we review the mathematics of matrix Lie groups and present the notation we use. In Section \ref{sec:formulation} we formulate the first order tracking problem on matrix Lie groups, before justifying our control law in Sections \ref{sec:local_proof} and \ref{sec:global_proof}. We first present a proof of local exponential tracking with a small radius of convergence. This proof requires substantially less mathematical machinery than the proof for globally exponential stability, and we believe that the local proof provides some insight useful in understanding the more general case in Section \ref{sec:global_proof}. We also provide a discrete time version of the proof in Section \ref{sec:discrete_proof}. Finally, in Section \ref{sec:results} we demonstrate our controller's performance experimentally in simulation as well as on a Sawyer robot arm.

\begin{figure}[t!]
    \centering
    \includegraphics[ width=\columnwidth]{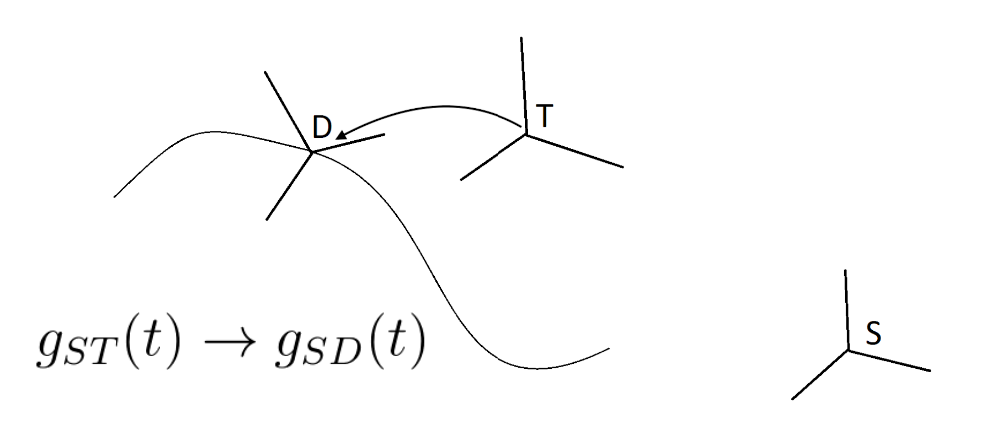}
    \caption{Visualization of the trajectory tracking problem where the group $G$ on which the system evolves is the Special Euclidean Group $SE(3)$. The system's pose is denoted by the coordinate frame $T$, which is represented with respect to an arbitrary reference frame $S$ by the transform $g_{ST}(t)$. Its desired pose is denoted by the coordinate frame $D$ and is represented by the transform $g_{SD}(t)$. The control input is the system body velocity $u = g_{ST}^{-1} \dot{g}_{ST}$. The goal of the tracking problem is to design a controller for $u$ that makes $g_{ST}(t)$ exponentially converge to $g_{SD}(t)$}
    \label{fig:tracking_problem}
    \vspace{-0.5cm}
\end{figure}

\section{Related Work} \label{sec:related_work}

Initial work in first order Cartesian control was done by Whitney \cite{WhitneyJacobian}, who used a pseudoinverse-Jacobian based controller and local coordinates. Khatib later developed a second order Cartesian controller for manipulators, by defining Cartesian analogs of the inertia, Coriolis, and gravity terms used in computed torque control \cite{khatib1987unified}. While these Cartesian control techniques have made their way into all the prominent robotics textbooks \cite{SpringerHandbook, CanudasDeWit, siciliano2010book, MLS, JJCraig, SpongBook}, only one book provides a control law that does not use local coordinates. \textit{Modern Robotics}, by Lynch and Park, presents a Cartesian control law very similar to our own \cite{ModernRobotics}. However, they present it only as an analog to their joint-space controller, and do so without proof. We contacted the authors, and they informed us that they do not have a proof.

There has also been substantial interest in developing geometric controllers for systems on Lie groups, beginning with Brockett \cite{brockett1973lie}. Lewis and Bullo did some of the initial work on second order control for fully actuated systems on Riemannian manifolds (which include Lie groups) \cite{BulloMurray, BulloLewisTextbook}. By defining the kinetic energy as a Riemannian metric on the manifold, they were able to derive the Lagrangian dynamics for these systems and define controllers with almost-global asymptotic tracking and local exponential convergence. Here they define ``almost-global" asymptotic stability as asymptotic convergence from all initial conditions apart from a nowhere dense measure zero set of states. In $SO(3)$ this measure zero set is the set of all rotations of exactly $180^\circ$. Intuitively, this occurs because there are two paths back to the identity with exactly equal length and opposite control input. As a result, the controller cannot pick a direction. In practice this is not an issue; since this set of states is measure zero, it will be encountered with probability zero in the discrete-time setting. 



Maithirpala extended Lewis and Bullo's work by defining a \textit{configuration error} on the Lie group, and using Lagrangian methods to stabilize that configuration error about the identity \cite{maithripala}. We define configuration error in Section \ref{sec:math_background} in a similar manner. Taeyoung Lee also extended Lewis and Bullo's work by designing a hierarchical controller for UAVs that allows for almost-globally asymptotic tracking of four degrees of freedom (translational position and yaw), despite underactuated vehicle dynamics \cite{McClamrochDrone}.

Robot control is not the only field with dynamic systems on Lie groups. Geometric control is also used in state estimation for SLAM \cite{sola2017quaternion} as well as in error correction in quantum computing \cite{zhangQbit}. It's likely our results could be useful in these fields as well.





\section{Mathematical Background} \label{sec:math_background}

Here we present an overview of the concepts from matrix Lie theory that we use in this paper. We follow the notation in \cite{MLS}. For a more detailed coverage of matrix Lie groups, we direct the reader to \cite{hall2015lie}. Most of this section is drawn from these two references.

\subsection{Matrix Lie Groups}
Matrix Lie groups are continuous groups represented by square matrices. As a result, they possess a number of useful algebraic and geometric properties. We denote a Lie group by $G$, and a member of the group by $g \in G$.

\vspace{2mm}
\subsubsection{Definition of a Matrix Group}
A Matrix group is a set $G$ of $n \times n$ matrices equipped with the standard matrix multiplication that satisfy the four \textit{group axioms}. It is closed under matrix multiplication (i.e. $\forall g_1, g_2 \in G, g_1 \cdot g_2 \in G$), contains the identity matrix, and is closed under inversion (i.e. $g \in G \implies g^{-1} \in G$). The final group axiom is that the multiplication operation should be associative, which we know to be the case for standard matrix multiplication.

\vspace{2mm}
\subsubsection{Algebraic Properties of a Matrix Group} 
The elements of Matrix group are all $n \times n$ invertible matrices, and thus possess all the properties of linear transformations. In particular, since $g$ is invertible, the columns of $g$ form a basis for $\mathbb{R}^n$ (or $\mathbb{C}^n$). Thus we can represent every $g \in G$ as a linear transformation between two coordinate frames $g_{AB} : B \rightarrow A$. $g_{AB}$ maps vectors in frame $B$ to vectors in frame $A$. We have $g_{AB}^{-1} = g_{BA}$ and $g_{AB} g_{BC} = g_{AC}$. 

\vspace{2mm}
\subsubsection{Geometric Properties of Matrix Lie Groups}
Matrix Lie groups are continuous groups. This means that their elements lie on a smooth manifold in $\mathbb{R}^{n \times n}$ (or $\mathbb{C}^{n \times n}$). In particular, this property means that there exists a derivative $\dot{g}, \forall g \in G$. This derivative lies in the tangent space of the manifold at $g$. The tangent space at the identity element $I \in G$ is called the Lie algebra $\mathfrak{g}$.

A useful property of Lie groups is that the tangent space $\dot{g}$ is diffeomorphic to the group's Lie algebra $\mathfrak{g}$. By multiplying $\dot{g}$ by $g^{-1}$, we can ``rotate" $\dot{g}$ to lie in $\mathfrak{g}$. There are two ways to do this. By left-multiplying we get
\begin{equation}
    g_{AB}^{-1} \dot{g}_{AB} = g_{BA} \dot{g}_{AB} = \hat{V}^b_{AB} \in \mathfrak{g}_B
\end{equation}
This is called the \textit{body velocity} and it's defined in the second, or ``body" frame $B$. By right-multiplying we get
\begin{equation} \label{eq:spatial_velocity}
    \dot{g}_{AB} g_{AB}^{-1} = \dot{g}_{AB} g_{BA} = \hat{V}^s_{AB} \in \mathfrak{g}_A
\end{equation}
This is called the \textit{spatial velocity} and it's defined in the first, or ``spatial" frame $A$.

\subsection{Some Common Matrix Lie Groups}

Here we briefly list some common matrix Lie groups:
\begin{description}
\item[$GL(n, \mathbb{C})$] \quad \,\, The general linear group on $\mathbb{C}$. The set of all invertible matrices in $\mathbb{C}^{n \times n}$. All matrix Lie groups are subgroups of this group. Its Lie algebra, $\mathfrak{gl}(n, \mathbb{C}$ is the set of complex $n \times n$ matrices.
\item[$SO(n)$] \quad \,\, The special orthogonal group, also known as the ``rotation group". The set of all matrices in $\mathbb{R}^{n \times n}$ such that $g^T = g^{-1}$ and $\det(g) = 1$. $SO(2)$ is the set of planar rotations, and $SO(3)$ is the set of 3D rotations about an axis. The Lie algebra of $SO(n)$, $\mathfrak{so}(n)$, is the set of skew symmetric matrices in $\mathbb{R}^{n \times n}$.
\item[$SU(n)$] \quad \,\, The special unitary group, or the set of all matrices in $\mathbb{C}^{n \times n}$ where $g^* = g^{-1}$ and $det(g) = 1$. This is the complex analog to $SO(n)$, and its Lie algebra, $\mathfrak{su}(n)$ is the set of skew-Hermitian matrices in $\mathbb{C}^{n \times n}$.
\item[$SE(n)$] \quad \,\, The special Euclidian group, or the set of all rigid body transforms, and is the Cartesian product $SE(n) = SO(n) \times \mathbb{R}^n$. 
\end{description}

\subsection{The exponential map}

By manipulating \ref{eq:spatial_velocity} we get the following matrix differential equation.
\begin{equation}
    \dot{g}(t) = \hat{V}^s(t) g(t)
\end{equation}
If we hold the spatial velocity constant, the solution to this differential equation is
\begin{equation}
    g(T) = e^{\hat{V}^s T} g(0)
    \label{eq:exp_map}
\end{equation}
Here, $e^{\hat V^s}$ is the matrix exponential of $\hat V^s$, defined using the usual power series. Since $g(T)$ and $g(0)$ are both elements of $G$, by closure, so is $e^{\hat V^s}$. The exponential map takes elements from the Lie algebra and maps them to elements of the Lie group. 

In light of \ref{eq:exp_map}, we see that the exponential of a velocity $\hat V$ in the Lie algebra is the result of starting off at the identity, and then wrapping around the manifold along the ``great circle" or \textit{geodesic} in the direction of $\hat V$. Geometrically, this corresponds to starting off at the identity, and then moving along the manifold with constant velocity $\hat V$ for 1 second. The result of this motion is the resulting element of the Lie group and the output of the exponential.

In general the exponential map is not surjective for an arbitrary matrix Lie group. For a given group $G$, there may exist some $g \in G$ such that $\nexists \hat{\xi} \in \mathfrak{g}, \, st. \, e^{\hat{\xi}} = g$. A number of common matrix Lie groups, including $SO(n)$, $SE(n)$, $SU(n)$, and $GL(n, \mathbb{C})$, possess globally surjective exponential maps , but others, such as $GL(n, \mathbb{R})$ do not. However, there is always a region near the identity in which the map is locally surjective. This region is at least the region where $g = e^{\hat{\xi}}, \, |\hat{\xi}| < \ln 2$, but it may be larger. The minimal region often corresponds to the region in which the spectral radius $|g - I| < 1$, but it does not always do so.

Within this surjective region, we can define a parameterization for $G$ called \textit{exponential coordinates}. The exponential coordinates for a element $g \in G$ are $\hat{\xi} \in \mathfrak{g}$, where $g = e^{\hat{\xi}}$.

\subsection{The logarithmic map}
The logarithm inverts the exponential map, where that is possible. The logarithm of an element $g \in G$ is the solution $\hat\xi \in \mathfrak{g}$ of the equation $g = e^{\hat\xi}$. This equation may in general have no solutions or many solutions. We define the logarithm only on the image $\exp(\mathfrak{g})$. Even so, the logarithm may be a multiple valued function. There may be many ways to recover a single valued function from this logarithm.

Locally, the exponential map is a diffeomorphism from an open neighbourhood of $0 \in \mathfrak{g}$ to an open neighbourhood of $I \in G$ \cite{hall2015lie}. In this region then, the logarithm is well defined and smooth.

For matrix lie groups, we may also consider the \textit{matrix logarithm}, which is the solution to $e^Z = X$ for a given matrix $X$. The matrix logarithm can be made to be unique (in fact, holomorphic), by picking a branch. Here, we consider chiefly the principal branch of the matrix logarithm. In particular, we can restrict the domain of the logarithm to only those matrices whose spectrum avoids the non-positive real axis. For any such matrix $X$ from a matrix Lie group, there is a unique matrix logarithm $Z$ all of whose eigenvalues lie in the strip $-\pi < \operatorname{Im} z < \pi$. This matrix is the \textit{principal logarithm} of $X$. Close to the identity, the principal logarithm can be written as a convergent power series
\begin{equation}
    \log(X) = \sum_{k=1}^{\infty} (-1)^{k+1} \frac{(X - I)^{k}}{k}
    \label{eq:log_series}
\end{equation}
which converges whenever $||X - I|| < 1$, though it may converge elsewhere as well (for instance, if $X - I$ is nilpotent).

We will henceforth use the operator $\log$ to denote such a well defined logarithmic map from a subset of the Lie group to its corresponding Lie algebra. Often, we will use the principal branch of the matrix logarithm for this, wherever it is defined and lies in the Lie algebra. In some groups such as $SE(3)$ and $SO(3)$, there exist analytic formulas for the logarithmic map, which we use.

\subsection{SO(3) as a Matrix Lie Group}

The set of rotation matrices forms a 3 dimensional matrix Lie group called the \textit{special orthogonal group}, notated $SO(3) = \{R \in \mathbb R^{3\times 3} : R^T R = R R^T = I, \ \det (R) = 1\}$. The associated lie algebra is $\mathfrak{so}(3)$. $\mathfrak{so}(3)$ is a 3 dimensional vector space, and is in fact exactly the set of $3 \times 3$ skew-symmetric matrices $\mathfrak{so}(3) = \{X \in \mathbb R^{3 \times 3}: X^T = -X\}$.

We further define the "hat" and "vee" operators. The hat operator $(\cdot)^\wedge: \mathbb R^3 \rightarrow \mathfrak{se}(3)$ is the canonical isomorphism between $\mathbb R^3$ and $\mathfrak{so}(3)$ and the vee operator $(\cdot)^{\vee}$ is its inverse. Given $\omega = (\omega_x, \omega_y, \omega_z)^T \in \mathbb R^3$ we write $\hat \omega = \omega^\wedge$ as
\begin{align}
    \hat \omega = \left[\begin{array}{ccc}
    0 & -\omega_{z} & \omega_{y} \\
    \omega_{z} & 0 & -\omega_{x} \\
    -\omega_{y} & \omega_{x} & 0
    \end{array}\right]
\end{align}
The exponential map on $\mathfrak{so}(3)$ into $SO(3)$ is surjective. Given any rotation matrix $R \in SO(3)$, there exists $\omega \in \mathbb R^3$ such that $R = e^{\hat\omega}$. The matrix $\hat\omega$ is called the ``exponential coordinate" of the rotation $R$. 

$\hat\omega = \omega^\wedge$ has a relevant geometric interpretation. The statement that $R$ can be written as $e^{\hat \omega}$ amounts to saying that the rotation described by $R$ can be implemented by a single rotation of $||\omega||_2$ radians about the unit axis in the direction of $\omega$. The surjectivity of the exponential map, then, is the statement that \textit{any} rotation matrix can be realized as a single rotation about some axis.

\subsection{SE(3) as a Matrix Lie Group}

Consider a rigid body moving through free space. A standard way to track the motion of a rigid body is by fixing a coordinate frame $T$ rigidly on the body and then describing the motion of this coordinate frame by tracking its orientation and position relative to a fixed world reference frame $S$, using a rotation matrix $R_{ST} \in SO(3)$ and a translation vector $p_{ST} \in \mathbb R^3$. Using homogeneous coordinates, we can express both the rotational and translational components together by stacking them into a single $4\times 4$ transformation matrix $g_{ST}$.
\begin{align}
    g_{ST} = \begin{bmatrix}
    R_{ST} & p_{ST} \\
    0 & 1
    \end{bmatrix}
    \label{eq:rigid_transforms}
\end{align}
The set of matrices of the form of \ref{eq:rigid_transforms} forms a 6 dimensional matrix Lie group called the \textit{special euclidean group} in 3 dimensions, notated $SE(3)$. Its associated Lie algebra is $\mathfrak{se}(3)$, the set of all rigid body velocities.

$\mathfrak{se}(3)$ is a 6 dimensional vector space. An element $\hat \xi \in \mathfrak{se}(3)$ takes the form
\begin{align}
    \hat \xi = \left[\begin{array}{cc}
    \hat \omega & v\\
    0 & 0 \\
    \end{array}\right]
\end{align}
where $\hat\omega \in \mathfrak{so}(3)$ and $v \in \mathbb R^3$. As in the case of $SO(3)$, we define "hat" and "vee" operators between $\mathfrak{se(3)}$ and $\mathbb R^6$. For the element $\hat \xi \in \mathfrak{se}(3)$, we get $\xi = \hat \xi ^{\vee}$ by stacking $v$ and $\omega = \hat\omega^\vee$ so that $\hat\xi^\vee = (v, \omega) \in \mathbb R^6$.

The exponential map from $\mathfrak{se}(3) \rightarrow SE(3)$ is also surjective. For any rigid body transform $g \in SE(3)$, we can find a $\hat\xi \in \mathfrak{se}(3)$ such that $g = e^{\hat\xi}$. In fact, if $\xi = (\omega, v)$, then the rotational matrix component $R$ of $g$ is exactly $e^{\hat \omega}$. So standard methods of finding $\xi$ start off by using the logarithm on $SO(3)$ to find $\omega$, and then solving for $v$ algebraically. We likewise call the matrix $\hat\xi$ the ``exponential coordinate" of $g$.

Elements of $\mathfrak{se}(3)$ have a geometric interpretation. If $g(t)$ is a smooth path in $SE(3)$, it describes the smooth motion of some rigid body through space. We say that the spatial velocity of the rigid body at time $t$ is $\hat V^s$ if it satisfies $\dot g = \hat V^s g$ at $t$. This velocity exists for any such $g(t)$ since $\dot g g^{-1} \in \mathfrak{se}(3)$. When this is the case, the velocity of any point $p$ on the rigid body as measured in homogeneous coordinates in the inertial frame, is given by $\dot p = \hat V^s p$. A similar interpretation exists for the body velocity $\hat V^b = g^{-1}\dot g$. This time, the velocity and position of the point $p$ is measured in the instantaneous body reference frame. Then, to express $g \in SE(3)$ as the exponential of a velocity $\hat \xi \in \mathfrak{se}(3)$, is to say that there exists a smooth path $\tilde g(t)$ with $\tilde g(0) = I$, $\tilde g(1) = g$, that evolves with uniform spatial velocity $\hat \xi$. That is to say, $\hat\xi$ is that spatial velocity that if a rigid body executes, starting from the identity, will bring it to the configuration $g$ in 1 second.


There exist analytic closed form expressions for the logarithm on $SE(3)$. We direct the reader to \cite{MLS} for an exposition of such expressions. The salient fact is that the logarithm can be made to be well defined on $SE(3)$ globally. In particular, it suffices to restrict the output of the logarithm to those elements $\xi = (v, \omega) \in \mathfrak{se}(3)$ satisfying $||\omega|| \leq \pi$. In this case, these formulas for the logarithm coincide with the principal matrix logarithm where the latter is defined. Hence, whenever $g$ has no eigenvalues on the negative real axis, the logarithm on $SE(3)$ is holomorphic.

Elsewhere, this logarithm is still well defined, but discontinuous. It's points of discontinuity are exactly the elements of $SE(3)$ that have eigenvalues on the negative real axis. For any rigid transform $g = e^{\hat\xi}$ with $\xi = (v, \omega)$, $g$ has eigenvalues $e^{i||\omega||}, e^{-i||\omega||}$ and 1, with the unit eigenvalue appearing twice. So, the points of discontinuity of the logarithm on $SE(3)$ are exactly those rigid transforms whose rotational components correspond to a rotation of $\pi$ radians about the $\omega$ axis. There are two possible rotation axes whenever $||\omega|| = \pi$ that differ by a sign, which would both serve well as the output of the logarithm at these points. We pick the output that occurs as the limit as $||\omega|| \rightarrow \pi^-$ radians about the same axis.


\section{First Order Control on Matrix Lie Groups}

\subsection{Problem Formulation} \label{sec:formulation}

We consider two time-varying transforms. $g_{ST}(t) \in G$ is the system state defined with respect to some arbitrary reference frame $S$. We, define a \textit{left-invariant} control system, which means that we control the body velocity of the system state.
\begin{equation}
    \dot{g}_{ST}(t) = g_{ST}(t) u(t), u \in \mathfrak{g}_T
\end{equation}
Equivalent results can be defined for right-invariant systems, where the control input is the spatial velocity, but we do not do so in this paper. $g_{SD}(t) \in G$ is the reference trajectory of the system, again defined with respect to the reference frame $S$. We assume that the reference trajectory is differentiable with a piece-wise continuous derivative.

We define the configuration error $g_{TD}(t) \in G$ as
\begin{equation}
    g_{TD}(t) = g_{ST}^{-1}(t) g_{SD}(t)
\end{equation}
If this configuration error is the identity matrix then the system state is tracking the trajectory. We'll then define the state error $\hat{\xi}_{TD}(t) \in \mathfrak{g}$ such that
\begin{gather}
    g_{TD}(t) = e^{\hat{\xi}_{TD}(t)} \\
    \hat{\xi}_{TD}(t) = \log(g_{TD}(t))
\end{gather}
Note that $\hat{\xi}_{TD}(t)$ is defined with respect to the $T$ frame, so $\hat{\xi}_{TD}(t) \in \mathfrak{g}_T$. Meanwhile, $\log(g_{ST}(t)) \in \mathfrak{g}_S$, which is a different vector space. Since $e^{\mathbf{0}} = I$, if the state error $\hat{\xi}_{TD}(t)$ is the zero matrix, then the configuration error $g_{TD}(t)$ must be the identity matrix. Thus, driving the state error to zero is equivalent to driving the configuration error to the identity. In the rest of the paper, we will drop the time dependence from the notation where convenient.

\subsection{Proof of Local Exponential Convergence} \label{sec:local_proof}

First we present a local stability proof. While this paper's main contribution is the global proof, this local proof allows us to define some useful tools and intuition without the added mathematical machinery of the global proof. Our goal is to find a control input $u$ such that the dynamics of the state error are
\begin{equation}
    \dot{\hat{\xi}}_{TD} = -k \hat{\xi}_{TD}
\end{equation}
where $k \in \mathbb{R}_+$ is a positive control gain. If this is the case, the state error will be 
\begin{equation}
    \hat{\xi}_{TD}(t) = e^{-k t} \hat{\xi}_{TD}(0)
\end{equation}
And will thus exponentially converge to the zero matrix, with rate of convergence $k$.

Our first task is to represent $\dot{\hat{\xi}}_{TD}$ as a function of $g_{TD}$ and its derivative. To do this, we use the power series expansion of the matrix logarithm as shown in \ref{eq:log_series}.
\begin{equation}
    \log(g_{TD}) = \sum_{k=1}^{\infty} (-1)^{k+1} \frac{(g_{TD} - I)^{k}}{k}
\end{equation}
This power series is a local representation of the matrix log, and is only guaranteed to converge when the spectral radius $|g_{TD} - I| < 1$. We now make the assumption that $g_{TD}$ commutes with its derivative. If this assumption holds, we can pull $\dot{g}_{TD}$ out of the summation, yielding
\begin{equation}
    \dot{\hat{\xi}}_{TD} 
        = \dot{g}_{TD} \sum_{k=0}^{\infty} (-1)^{k} (g_{TD} - I)^{k}
        = \dot{g}_{TD} \sum_{k=0}^{\infty} (I - g_{TD})^k
\end{equation}
This is a Neumann series \cite{higham}. Since $|g_{TD} - I| < 1$, the series converges and we have
\begin{equation}
    \sum_{k=0}^{\infty} (I - g_{TD})^k = g_{TD}^{-1}
\end{equation}
Thus we have
\begin{equation} \label{eq:dxi_local}
    \dot{\hat{\xi}}_{TD}
        = \dot{g}_{TD} g_{TD}^{-1} = g_{TD}^{-1} \dot{g}_{TD}
\end{equation}
Now we express $\dot{g}_{TD}$ in terms of our control input $u$. Since $g_{TD} = g_{ST}^{-1} g_{SD}$ the chain rule yields
\begin{align}
    \dot{g}_{TD} &= - g_{ST}^{-1} \dot{g}_{ST} g_{ST}^{-1} g_{SD} + g_{ST}^{-1} \dot{g}_{SD} \nonumber\\
    &= - u g_{ST}^{-1} g_{SD} + g_{ST}^{-1} \dot{g}_{SD} \label{eq:dg_TD}
\end{align}
Now we can design a stabilizing control law for $u$.
\begin{lemma}\label{lemma:local_stability}
A controller of the form
\begin{equation}
    u = k \hat{\xi}_{TD} + g_{TD} \hat{V}^b_{SD} g_{TD}^{-1}
\end{equation}
with $k \in \mathbb{R}_+$ a positive scalar control gain and $\hat{V}^b_{SD} = g^{-1}_{SD} \dot{g}_{SD}$, results in local exponential trajectory tracking in the subset of the region $|g_{TD} - I| < 1$ where the principal matrix logarithm lies in the Lie algebra $\mathfrak g$. Note that this region is always nonempty, connected, and contains the identity.

\end{lemma}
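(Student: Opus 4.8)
The plan is to substitute the proposed feedback into the error dynamics and watch the feedforward term annihilate the reference motion, leaving a pure exponential decay. First I would simplify the drift in \eqref{eq:dg_TD} using $g_{ST}^{-1}g_{SD}=g_{TD}$ and $g_{ST}^{-1}\dot g_{SD}=g_{TD}\,g_{SD}^{-1}\dot g_{SD}=g_{TD}\hat V^b_{SD}$, so that $\dot g_{TD}=-u\,g_{TD}+g_{TD}\hat V^b_{SD}$. Inserting $u=k\hat\xi_{TD}+g_{TD}\hat V^b_{SD}g_{TD}^{-1}$ and expanding $-u\,g_{TD}$, the feedforward piece contributes exactly $-g_{TD}\hat V^b_{SD}$, which cancels the drift and leaves the commutator-free closed loop $\dot g_{TD}=-k\hat\xi_{TD}\,g_{TD}$. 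I would emphasize that this reduction is purely algebraic and invokes no commutativity.

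The step I expect to be the genuine obstacle is the commuting hypothesis used to reach \eqref{eq:dxi_local}, namely $\dot{\hat\xi}_{TD}=g_{TD}^{-1}\dot g_{TD}$: a generic curve on $G$ does \emph{not} satisfy $[g_{TD},\dot g_{TD}]=0$, and without it the derivative of the matrix logarithm is the far messier inverse of the right-trivialized differential of $\exp$. Rather than assume commutativity I would establish it a posteriori. The closed loop $\dot g_{TD}=-k\hat\xi_{TD}g_{TD}$ has a right-hand side that is a smooth function of $g_{TD}$ alone (through $\hat\xi_{TD}=\log g_{TD}$) wherever the principal log is smooth, so Picard--Lindel\"of gives a unique local solution. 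I would then exhibit the candidate $g_{TD}(t)=\exp\!\big(e^{-kt}\hat\xi_{TD}(0)\big)$ and verify it: the argument and its time derivative are both scalar multiples of the fixed element $\hat\xi_{TD}(0)$, hence commute, so $\dot g_{TD}=-k\big(e^{-kt}\hat\xi_{TD}(0)\big)g_{TD}$, while contraction toward the identity only shrinks the eigenvalues' imaginary parts, keeping them in the principal strip so that $\log g_{TD}(t)=e^{-kt}\hat\xi_{TD}(0)\in\mathfrak g$ for all $t\ge 0$. By uniqueness this candidate is the trajectory; it lies on the one-parameter subgroup generated by $\hat\xi_{TD}(0)$, so $[g_{TD},\dot g_{TD}]=0$ holds along it and the passage to \eqref{eq:dxi_local} is vindicated rather than circular. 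A bonus is that the feedback stays well defined and $\mathfrak g$-valued for all forward time, so the local solution does not escape in finite time.

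The convergence claim is then immediate: $\hat\xi_{TD}(t)=e^{-kt}\hat\xi_{TD}(0)\to 0$, i.e. exponential tracking with rate $k$. Equivalently, staying inside the paper's derivation, I would substitute $u$ into \eqref{eq:dxi_local} and use the unconditional fact that a matrix commutes with its own logarithm, so $g_{TD}^{-1}\hat\xi_{TD}g_{TD}=\hat\xi_{TD}$; the feedforward again cancels the drift and $\dot{\hat\xi}_{TD}=-k\hat\xi_{TD}$ drops out directly. Both routes give the same rate.

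It remains to dispatch the parenthetical claims about the admissible region $\mathcal R=\{g\in G:|g-I|<1,\ \log g\in\mathfrak g\}$. That $\mathcal R$ contains $I$ is clear since $\log I=0\in\mathfrak g$, and $\mathcal R$ is nonempty and contains an open neighborhood of $I$ because $\exp$ is a diffeomorphism from a ball about $0\in\mathfrak g$ onto a neighborhood of $I$ on which the principal log is $\mathfrak g$-valued. For connectedness I would realize the component of $I$ as the $\exp$-image of a connected subset of $\mathfrak g$, using that scaling an exponential coordinate toward $0$ contracts its eigenvalues and hence preserves the principal-log condition. The delicate point, and where I would spend the most care, is the interaction of the spectral bound $|g-I|<1$ with the eigenvalue-strip condition along such a scaling, since the norm of $e^{s\hat\xi}-I$ need not be monotone in $s$.
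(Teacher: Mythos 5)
Your proposal is correct, and it even contains the paper's own argument as its ``second route'': the paper plugs the controller into \eqref{eq:dg_TD} to obtain $\dot g_{TD} = -k\hat\xi_{TD}\,g_{TD}$, notes that $\hat\xi_{TD}$ commutes with $g_{TD}=e^{\hat\xi_{TD}}$ so that the commutativity hypothesis behind \eqref{eq:dxi_local} is satisfied along the closed loop, and reads off $\dot{\hat\xi}_{TD}=-k\hat\xi_{TD}$. Where you genuinely depart is in how that hypothesis is discharged: instead of the paper's pointwise verification, you exhibit the explicit solution $g_{TD}(t)=\exp\bigl(e^{-kt}\hat\xi_{TD}(0)\bigr)$, check that scaling by $e^{-kt}$ keeps the spectrum of the exponent inside the principal strip (so its principal log is exactly $e^{-kt}\hat\xi_{TD}(0)$ and remains in $\mathfrak g$), and invoke Picard--Lindel\"of uniqueness to identify this curve with the closed-loop trajectory. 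This buys three things the paper leaves implicit: the closed-loop error trajectory is shown to exist for all forward time, the set where the feedback is well defined is shown to be forward invariant, and --- notably --- your argument never uses the series-convergence condition $|g_{TD}-I|<1$ at all, only smoothness of the principal logarithm near the trajectory, so it already delivers the stronger conclusion of Theorem \ref{thm:global_continuous} without the BCH machinery of Lemma \ref{lemma:formal_series}. To be fair to the paper, its argument is not actually circular, contrary to the worry motivating your detour: commutativity of $g_{TD}$ with $\dot g_{TD}=-k\hat\xi_{TD}g_{TD}$ follows pointwise in time from $[\hat\xi_{TD},g_{TD}]=0$, which is all that the term-by-term differentiation of the log series requires. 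The one loose end on your side is connectedness of the admissible region, which you flag as delicate but do not finish; since the paper asserts that parenthetical note without any proof, this does not put you behind its own treatment.
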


\begin{proof}
By plugging the controller into \eqref{eq:dg_TD}, we get
\begin{equation}
    \dot{g}_{TD} = - k \hat{\xi}_{TD} g_{TD}
\end{equation}
Since $g_{TD} = e^{\hat{\xi}_{TD}}$, $\hat{\xi}_{TD}$ commutes with $g_{TD}$. Thus, $\dot{g}_{TD}$ commutes with $g_{TD}$, fulfilling the assumption made in \eqref{eq:dxi_local}. We now have
\begin{equation}
    \dot{\hat{\xi}}_{TD} = \dot{g}_{TD} g_{TD}^{-1} = - k \hat{\xi}_{TD}
\end{equation}
Since \eqref{eq:dxi_local} is only valid for $|g_{TD} - I| < 1$, we have local exponential convergence in this region.
\end{proof}

\begin{figure*}[t]
\centering
\subfloat[\label{fig:se3_error}]{\includegraphics[width = 2.25in]{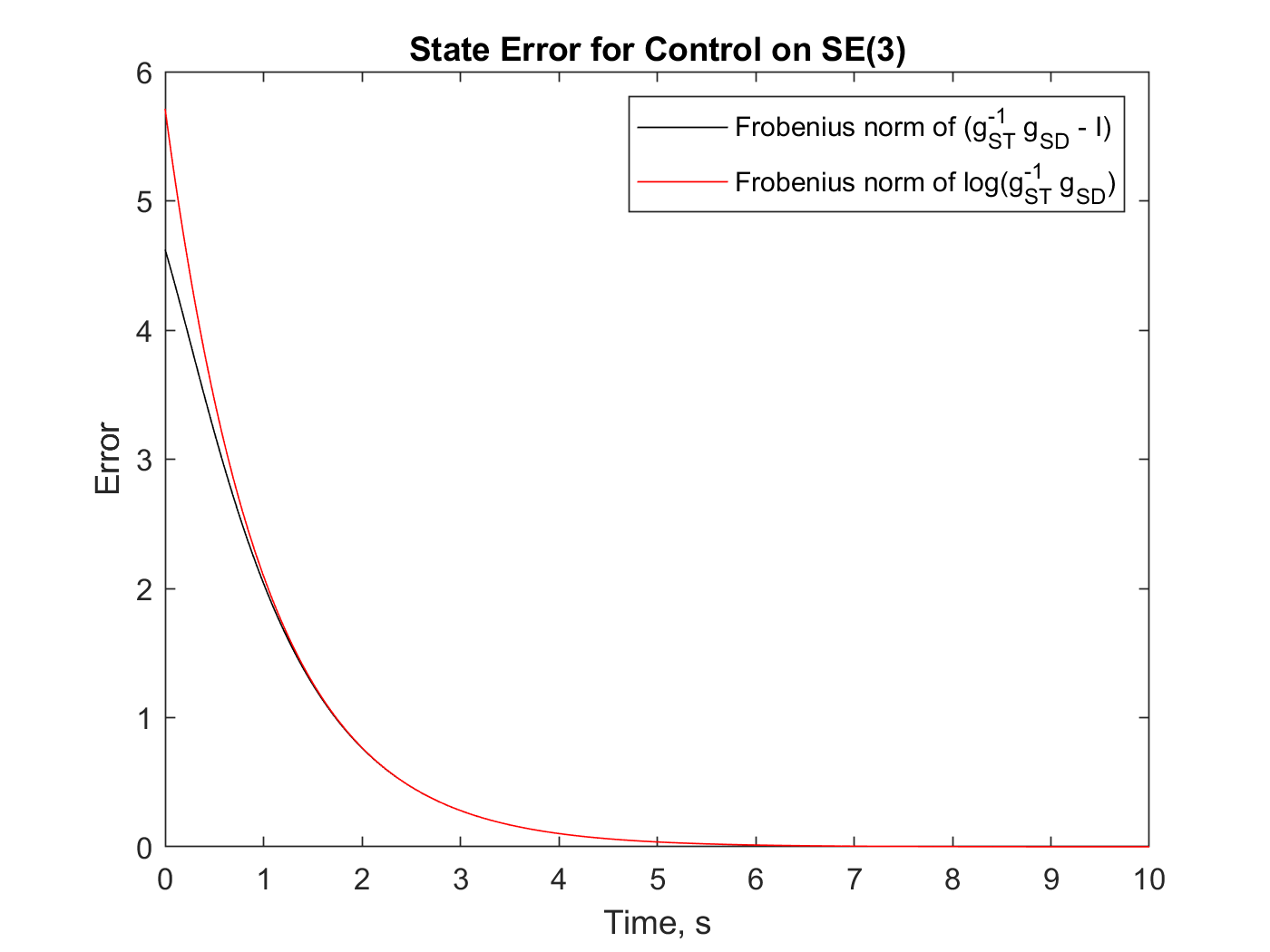}}
\subfloat[\label{fig:su4_error}]{\includegraphics[width = 2.25in]{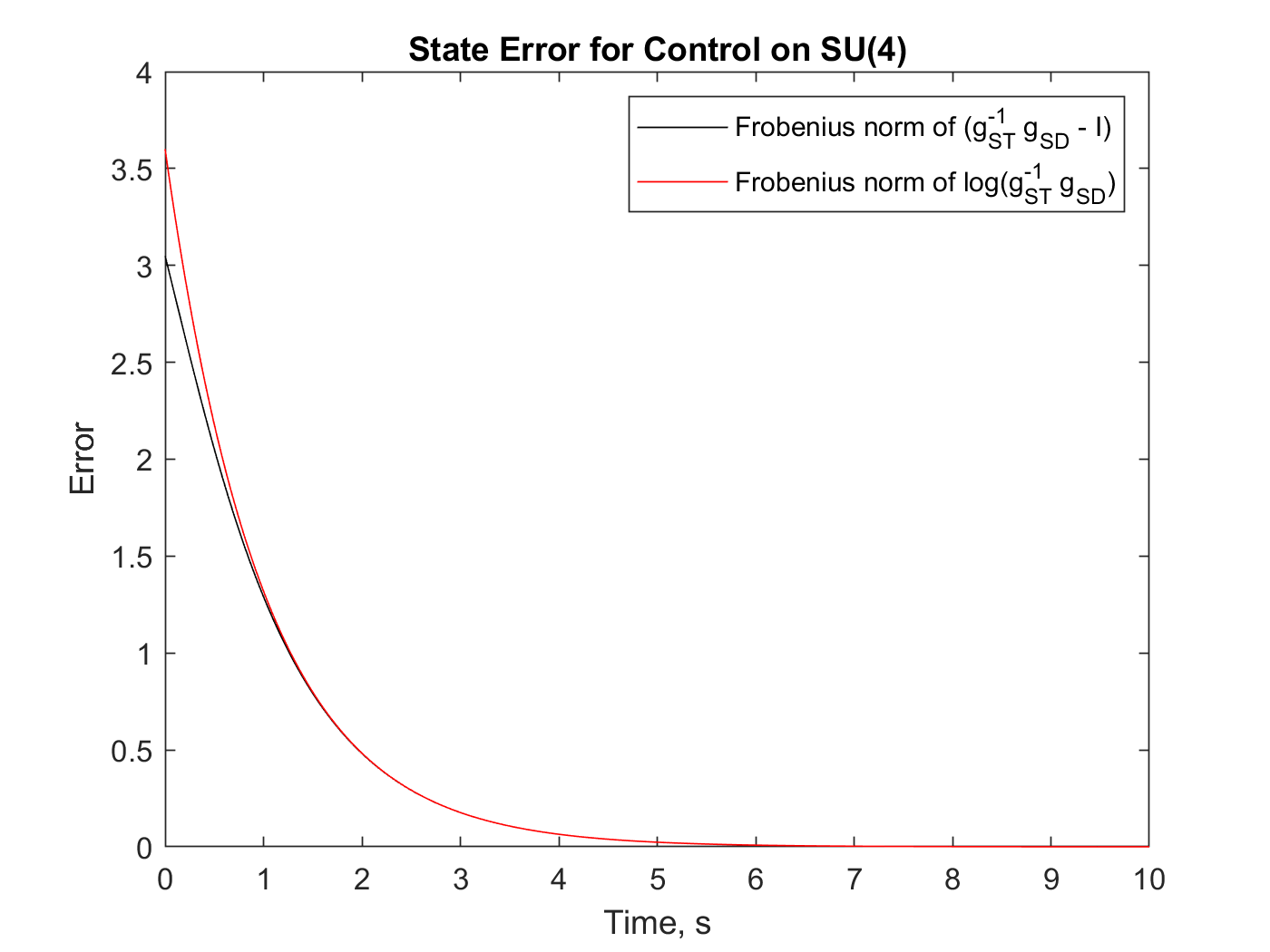}}
\subfloat[\label{fig:gl4_error}]{\includegraphics[width = 2.25in]{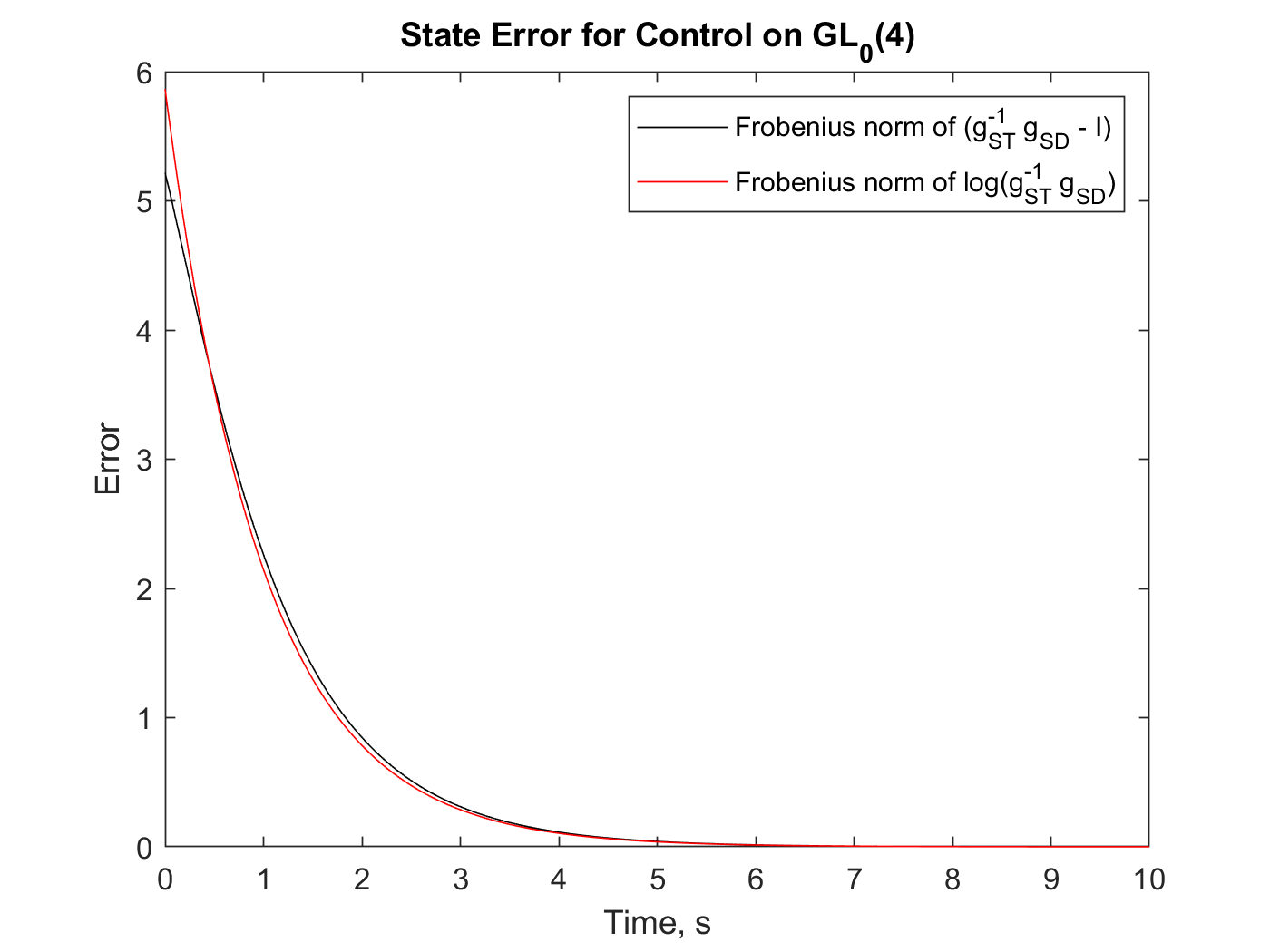}}
\vspace{-0.2cm}
\caption{Visualization of tracking error for simulated systems which evolve on various matrix Lie groups. For all three systems the controller gain was $k = 1$. We display the commonly-used metric $\|g_{TD} - I\|_F$ as well as the norm of the error metric used in the controller $\|\log(g_{TD})\|_F$. (a) Special Euclidean Group $SE(3)$.  (b) Special Unitary Group $SU(4)$. (c) General Linear Group on $\mathbb{R}$ with positive determinant $GL_0(4, \mathbb{R})$.}
\vspace{-0.5cm}
\label{fig:simulation_errors}
\end{figure*}   

\subsection{Proof of Global Exponential Tracking} \label{sec:global_proof}

The primary flaw with the proof in Lemma \ref{lemma:local_stability} is that it relies on a local representation of the matrix logarithm. While algorithms exist that can numerically calculate the matrix algorithm to arbitrary levels of accuracy \cite{higham}, there does not exist a global analytic solution of the matrix log for an arbitrary matrix. Thus, we derive an analytic expression for $\frac{d}{dt} \log(g)$ that does not depend on an analytic expression for $\log(g)$. We begin by considering a small motion in $G$.
\begin{equation}
    g(t + \delta t) = e^{\hat{X}\delta t} g(t) \label{eq:small_motion}
\end{equation}
Since $G$ is a connected matrix Lie group, and $g(t + \delta t)$ and $g(t)$ are elements of $G$, there must exist some matrix $e^{\hat{X} \delta t} \in G$ that fulfills \eqref{eq:small_motion}

\begin{lemma}\label{lemma:small_motion}
Consider \eqref{eq:small_motion}. As $\delta t \rightarrow 0$, $\hat{X}$ approaches the spatial velocity of $g$, $\hat{V}^s$, where $\hat{V}^s = \dot{g}g^{-1}$
\end{lemma}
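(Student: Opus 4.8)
The plan is to solve \eqref{eq:small_motion} explicitly for $\hat X$ and then take the limit directly. First I would right-multiply \eqref{eq:small_motion} by $g(t)^{-1}$ to obtain $e^{\hat X \delta t} = g(t+\delta t)\, g(t)^{-1}$. Since $g$ is a smooth (hence continuous) curve in $G$, the right-hand side tends to the identity as $\delta t \to 0$, so for all sufficiently small $\delta t$ it lies inside the neighbourhood of $I$ on which the principal logarithm is a diffeomorphism. On this neighbourhood the matrix $\hat X \delta t$ is uniquely determined, and I would fix this branch by writing $\hat X \delta t = \log\!\big(g(t+\delta t)\, g(t)^{-1}\big)$, which also guarantees $\hat X \in \mathfrak g$ since $g(t+\delta t)\,g(t)^{-1} \in G$ lies near the identity.

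Next I would divide by $\delta t$ and expand the logarithm using its power series \eqref{eq:log_series}. Writing $M = g(t+\delta t)\,g(t)^{-1}$, the key observation is that
\[
    M - I = \big(g(t+\delta t) - g(t)\big)\, g(t)^{-1},
\]
so that $\tfrac{1}{\delta t}(M - I) \to \dot g(t)\, g(t)^{-1}$ by the definition of the derivative. Every higher-order term of the series satisfies $\|(M-I)^k\| = O(\delta t^k)$, so after dividing by $\delta t$ these terms all vanish in the limit. Hence
\[
    \hat X = \frac{1}{\delta t}\log(M) = \frac{1}{\delta t}(M - I) + O(\delta t) \;\longrightarrow\; \dot g(t)\, g(t)^{-1} = \hat V^s,
\]
which is the claim, recalling from \eqref{eq:spatial_velocity} that $\dot g\, g^{-1}$ is precisely the spatial velocity.

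The main subtlety — and the step I would be most careful with — is the branch selection and domain issue in the first paragraph: the equation $e^{\hat X \delta t} = M$ has many solutions in general, so the statement ``$\hat X$ approaches $\hat V^s$'' is only meaningful once we commit to the principal branch that varies continuously with $\delta t$ and reduces to $0$ at $\delta t = 0$. Establishing that this branch lies in $\mathfrak g$ and that the series representation \eqref{eq:log_series} is valid for small $\delta t$ (guaranteed once $\|M - I\| < 1$, which holds by continuity) is what makes the limit argument rigorous. An alternative that sidesteps the explicit series is to set $Y(\delta t) = \hat X \delta t = \log(M)$, note that $Y(0) = 0$, and differentiate $e^{Y(\delta t)} = g(t+\delta t)\,g(t)^{-1}$ at $\delta t = 0$; because $Y(0) = 0$ the derivative of the matrix exponential collapses to $\dot Y(0) = \lim_{\delta t \to 0}\hat X$, while the right-hand side differentiates to $\dot g\, g^{-1}$, giving the same conclusion.
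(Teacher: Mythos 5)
Your proof is correct, but it runs in the opposite direction from the paper's. The paper works forward from the defining equation: it expands the exponential, $e^{\hat X \delta t} = I + \hat X \delta t + \mathcal{O}(\delta t^2)$, substitutes this into the difference quotient $\bigl(g(t+\delta t) - g(t)\bigr)/\delta t$, and reads off $\hat X g(t) = \dot g(t)$ in the limit, hence $\hat X = \dot g g^{-1}$. You instead invert the equation first, defining $\hat X = \frac{1}{\delta t}\log\bigl(g(t+\delta t)g(t)^{-1}\bigr)$ via the principal branch, and then expand the logarithm series \eqref{eq:log_series}, bounding the higher-order terms by $\mathcal{O}(\delta t^{k})$ so that only the linear term survives. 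What your route buys is precision on a point the paper's statement and proof leave ambiguous: the equation $e^{\hat X \delta t} = g(t+\delta t)g(t)^{-1}$ does not determine $\hat X$ uniquely, so the claim ``$\hat X$ approaches $\hat V^s$'' is only well-posed once a branch is fixed, and the paper's proof also quietly treats the $\delta t$-dependent matrix $\hat X$ as a constant when passing to the limit. Notably, your branch choice is exactly the one the paper itself adopts later, in the proof of Lemma \ref{lemma:formal_series}, where it sets $\hat X = \frac{1}{\delta t}\log\bigl(g(t+\delta t)g^{-1}(t)\bigr)$; so your version of the lemma is the one actually invoked downstream. What the paper's route buys is brevity and no reliance on the convergence region of the log series --- it needs only the exponential expansion, at the cost of the rigor you supply.
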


\begin{proof}
We take the series expansion of the matrix exponential in \eqref{eq:small_motion} to get
\begin{equation}
    g(t + \delta t) = \left(I + \hat{X} \delta t + \mathcal{O}(\delta t^2) \right) g(t)
\end{equation}
And plug it into the limit definition of the derivative
\begin{equation}
    \lim_{\delta t \rightarrow 0} \frac{g(t + \delta t) - g(t)}{\delta t} = \hat{X}g(t) = \dot{g}(t)
\end{equation}
Since $\dot{g}(t) = \hat{X}g(t)$ we have $\hat{X} = \dot{g}(t) g^{-1}(t) = \hat{V}^s$.
\end{proof}

Examining \eqref{eq:small_motion} further, we see that it is of the form
\begin{equation}
    e^{\hat{Z}} = e^{\hat{X}}e^{\hat{Y}}
\end{equation}
Since we're working with exponential coordinates we want an expression for $\hat{Z}$ in terms of $\hat{X}$ and $\hat{Y}$. That expression is the Baker-Campbell-Hausdorff Formula.

\begin{definition}[The Baker-Campbell-Hausdorff Formula] 

Given an equation of the form

\begin{equation}
    e^{\hat{Z}} = e^{\hat{X}}e^{\hat{Y}}
\end{equation}
the Baker-Campbell-Hausdorff (BCH) formula expresses $\hat{Z}$ as a formal series of matrix commutators of $\hat{X}$ and $\hat{Y}$. The first few terms are
\begin{align}
    \hat{Z} = &\hat{X} + \hat{Y} + \frac{1}{2}[\hat{X}, \hat{Y}] \nonumber \\
    &+ \frac{1}{12}[\hat{X}, [\hat{X}, \hat{Y}]] - \frac{1}{12}[\hat{Y}, [\hat{X}, \hat{Y}]] \nonumber \\
    &+ \frac{1}{24}[\hat{Y}, [\hat{X}, [\hat{X}, \hat{Y}]]] + \cdots
\end{align}
where $[A, B] = AB - BA$. This series is infinite and does not necessarily converge. However, when it does converge, it converges to $\log(e^{\hat{X}} e^{\hat{Y}})$. \cite{MugerBCHD, hall2015lie}
\end{definition}

\begin{lemma} \label{lemma:formal_series}
We can represent $\dot{\hat{\xi}} = \frac{d}{dt}\log(g)$ as a formal series of commutators of $\hat{\xi}$ and the spatial velocity $\hat{V}^s = \dot gg^{-1}$.
\end{lemma}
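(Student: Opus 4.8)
The plan is to combine the small-motion relation \eqref{eq:small_motion} with the Baker--Campbell--Hausdorff formula and then differentiate. First I would write the group element at two nearby times in exponential coordinates, $g(t) = e^{\hat{\xi}(t)}$ and $g(t+\delta t) = e^{\hat{\xi}(t+\delta t)}$, and substitute these into \eqref{eq:small_motion} to obtain
\begin{equation}
    e^{\hat{\xi}(t+\delta t)} = e^{\hat{X}\delta t}\, e^{\hat{\xi}(t)}.
\end{equation}
This is exactly of the BCH form $e^{\hat{Z}} = e^{\hat{A}}e^{\hat{B}}$ with $\hat{A} = \hat{X}\delta t$, $\hat{B} = \hat{\xi}(t)$, and $\hat{Z} = \hat{\xi}(t+\delta t)$. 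Applying the BCH formula therefore expresses $\hat{\xi}(t+\delta t)$ as a formal series of nested commutators of $\hat{X}\delta t$ and $\hat{\xi}(t)$.

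Next I would form the difference quotient $(\hat{\xi}(t+\delta t) - \hat{\xi}(t))/\delta t$ and take the limit $\delta t \to 0$ to recover $\dot{\hat{\xi}}$. The key step is bookkeeping in powers of $\delta t$: every commutator monomial in the BCH series carries one factor of $\hat{X}\delta t$ for each appearance of $\hat{A}$. The lone $\hat{B} = \hat{\xi}(t)$ term cancels against the subtracted $\hat{\xi}(t)$; every monomial containing two or more factors of $\hat{A}$ is $\mathcal{O}(\delta t^2)$ and vanishes after dividing by $\delta t$ and letting $\delta t \to 0$; and precisely the monomials linear in $\hat{A}$ survive, each contributing a single power of $\delta t$ that cancels the division. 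Finally, Lemma \ref{lemma:small_motion} lets me replace $\hat{X}$ by $\hat{V}^s$ in the limit. What remains is a series in which $\hat{V}^s$ appears exactly once and $\hat{\xi}$ appears an arbitrary number of times in each nested commutator, i.e. a formal series of commutators of $\hat{\xi}$ and $\hat{V}^s$, whose leading terms are $\dot{\hat{\xi}} = \hat{V}^s + \tfrac{1}{2}[\hat{V}^s,\hat{\xi}] + \cdots$.

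The main obstacle is justifying the term-by-term passage to the limit through the infinite BCH series — that I may discard all $\mathcal{O}(\delta t^2)$ monomials and retain only the part linear in $\hat{A}$. Since the statement asks only for a \emph{formal} series representation, I can carry out this extraction at the level of formal power series in $\delta t$, where collecting the coefficient of $\delta t^1$ is well defined and no analytic convergence of the full BCH series is needed; this is precisely what makes the ``formal series'' framing of the lemma essential, and it matches the caveat in the BCH definition that the series need not converge. I expect the surviving linear part to organize into the standard right-trivialized inverse differential of the exponential map (the $\mathrm{dexp}^{-1}$ series with Bernoulli-number coefficients), but proving that closed form is not required here — exhibiting the commutator series suffices.
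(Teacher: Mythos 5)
Your proposal follows essentially the same route as the paper's proof: the same small-motion relation $e^{\hat{\xi}(t+\delta t)} = e^{\hat{X}\delta t} e^{\hat{\xi}(t)}$, the same application of the BCH formula with $\hat{A} = \hat{X}\delta t$ and $\hat{B} = \hat{\xi}(t)$, the same difference-quotient limit, and the same use of Lemma \ref{lemma:small_motion} to replace $\hat{X}$ by $\hat{V}^s$. If anything, your bookkeeping of which BCH monomials survive the limit (and the remark that the extraction is legitimate at the level of formal series in $\delta t$) is more explicit than the paper's, which simply absorbs those terms into $\mathcal{O}(\delta t)$ and $\mathcal{O}(\delta t^2)$ symbols.
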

\begin{proof}
We first examine a small motion in $G$. Write
\begin{equation}
    g(t + \delta t) = e^{\hat{\xi}(t + \delta t)} = e^{\hat{X}\delta t} e^{\hat{\xi}(t)}
\end{equation}
Where we pick $\hat X = \frac{1}{\delta t} \log\left(g(t + \delta t) g^{-1}(t)\right)$.
By the BCH formula, we can represent
\begin{equation}
    \hat{\xi}(t + \delta t) = \hat{X}\delta t + \hat{\xi}(t) + \frac{1}{2}[\hat{X}\delta t, \hat{\xi}(t)] + \mathcal{O}(\delta t) + \mathcal{O}(\delta t^2)
\end{equation}
Here $\mathcal{O}(\delta t)$ are first order terms of $\delta t$, and $\mathcal{O}(\delta t^2)$ are terms of second order or higher. We now use the limit definition of the derivative to get
\begin{equation}
    \dot{\hat{\xi}} = \lim_{\delta t \rightarrow 0} \frac{\hat{\xi}(t + \delta t) - \hat{\xi}(t)}{\delta t} = \hat{X} + \frac{1}{2}[\hat{X}, \hat{\xi}(t)] + \frac 1 {\delta t} \mathcal{O}(\delta t)
\end{equation}
By Lemma \ref{lemma:small_motion} this becomes
\begin{equation}
    \dot{\hat{\xi}} = \hat{V}^s + \frac{1}{2}[\hat{V}^s, \hat{\xi}] + \frac{1}{12}[\hat{\xi}, [\hat{\xi}, \hat{V}^s]] + \cdots
\end{equation}
\end{proof}
With this we can now define
\begin{theorem}\label{thm:global_continuous}
A controller of the form
\begin{equation}
    u = k \hat{\xi}_{TD} + g_{TD} \hat{V}^b_{SD} g_{TD}^{-1}
\end{equation}
with $k$ a positive scalar results in local exponential trajectory tracking within the region in which the exponential map $exp : \mathfrak{g} \rightarrow G$ is surjective. For many commonly used matrix Lie groups, such as $SO(n)$, $SE(n)$, $GL(n, \mathbb{C})$ and $SU(n)$, this results in global exponential stability.
\end{theorem}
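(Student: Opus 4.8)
The plan is to recycle the closed-loop identity derived in the proof of Lemma~\ref{lemma:local_stability}, but to replace the Neumann/power-series step (the source of the restrictive hypothesis $|g_{TD}-I|<1$) with the commutator series of Lemma~\ref{lemma:formal_series}, which only requires that $\log$ be defined. First I would substitute the proposed controller into \eqref{eq:dg_TD}. Writing $g_{ST}^{-1}g_{SD}=g_{TD}$ and $g_{ST}^{-1}\dot g_{SD}=g_{TD}\hat V^b_{SD}$ gives $\dot g_{TD}=-u\,g_{TD}+g_{TD}\hat V^b_{SD}$, and the feedforward term in $u$ cancels exactly, leaving $\dot g_{TD}=-k\hat\xi_{TD}\,g_{TD}$. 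Right-multiplying by $g_{TD}^{-1}$ identifies the spatial velocity of the configuration error as $\hat V^s_{TD}=\dot g_{TD}\,g_{TD}^{-1}=-k\hat\xi_{TD}$.

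The crux is then a commuting argument. Since $\hat V^s_{TD}=-k\hat\xi_{TD}$ is a scalar multiple of $\hat\xi_{TD}$, we have $[\hat\xi_{TD},\hat V^s_{TD}]=-k[\hat\xi_{TD},\hat\xi_{TD}]=0$, and likewise every nested bracket in the series of Lemma~\ref{lemma:formal_series} vanishes. The series thus collapses to its leading term, yielding the target dynamics $\dot{\hat\xi}_{TD}=\hat V^s_{TD}=-k\hat\xi_{TD}$. Integrating gives $\hat\xi_{TD}(t)=e^{-kt}\hat\xi_{TD}(0)$, so the state error decays exponentially and $g_{TD}=e^{\hat\xi_{TD}}$ converges exponentially to the identity, i.e. $g_{ST}\to g_{SD}$.

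To pin down the region of validity, I would note that the construction only needs $\hat\xi_{TD}=\log g_{TD}$ to exist, i.e. $g_{TD}\in\exp(\mathfrak g)$; this is exactly the surjectivity region of the statement, and equals all of $G$ for $SO(n)$, $SE(n)$, $GL(n,\mathbb C)$, and $SU(n)$. Because $\mathfrak g$ is a linear space and $\|\hat\xi_{TD}(t)\|=e^{-kt}\|\hat\xi_{TD}(0)\|$ is monotonically decreasing, the trajectory $\hat\xi_{TD}(t)$ never leaves $\mathfrak g$ and $g_{TD}(t)$ never leaves $\exp(\mathfrak g)$, so the closed loop remains well posed and stays inside the region where the argument applies. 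As a clean way to secure well-posedness I would also verify directly that $g_{TD}(t)=\exp\!\big(e^{-kt}\hat\xi_{TD}(0)\big)$ satisfies $\dot g_{TD}=-k\log(g_{TD})g_{TD}$ and invoke uniqueness of solutions of this ODE on $G$.

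The main obstacle is making the series truncation rigorous: Lemma~\ref{lemma:formal_series} provides a \emph{formal} series that need not converge, so the vanishing of the brackets must be justified not by convergence but by the exact identity $e^{A}e^{B}=e^{A+B}$ for commuting $A,B$, applied to the small-motion step $\hat\xi(t+\delta t)=\hat V^s_{TD}\delta t+\hat\xi(t)$ when $\hat V^s_{TD}\delta t$ commutes with $\hat\xi(t)$. A secondary subtlety is the branch and differentiability of $\log$: on the measure-zero set where it is discontinuous (e.g. the $\pi$-rotations in $SE(3)$), $\hat\xi_{TD}$ is not differentiable and the controller cannot select a descent direction, so the genuinely \emph{global} claim should be qualified to the almost-global statement discussed in Section~\ref{sec:related_work}, with exponential tracking holding for every initial condition off that nowhere-dense set.
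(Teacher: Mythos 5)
Your proposal follows essentially the same route as the paper's own proof: substitute the controller to obtain $\dot g_{TD} = -k \hat{\xi}_{TD}\, g_{TD}$ (reusing Lemma~\ref{lemma:local_stability}), identify the spatial velocity $\hat{V}^s_{TD} = \dot g_{TD}\, g_{TD}^{-1} = -k \hat{\xi}_{TD}$, and note that every bracket in the series of Lemma~\ref{lemma:formal_series} vanishes because $\hat{V}^s_{TD}$ is a scalar multiple of $\hat{\xi}_{TD}$, collapsing the dynamics to $\dot{\hat{\xi}}_{TD} = -k \hat{\xi}_{TD}$. Your additional remarks --- justifying the truncation through the exact identity $e^{A}e^{B} = e^{A+B}$ for commuting matrices rather than term-by-term cancellation in a possibly divergent formal series, verifying that the trajectory remains in $\exp(\mathfrak{g})$, and flagging the branch discontinuity of $\log$ at points such as $\pi$-rotations in $SE(3)$ --- are refinements of the same argument that the paper itself glosses over, not a different proof.
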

\begin{proof}
In Lemma \ref{lemma:local_stability} we show this controller results in
\begin{equation}
    \dot{g}_{TD} = -k \hat{\xi}_{TD} g_{TD}
\end{equation}
The spatial velocity is thus
\begin{equation}
    \hat{V}^s_{TD} = \dot{g}_{TD} g_{TD}^{-1} = -k \hat{\xi}_{TD}
\end{equation}
Since $\hat{V}^s_{TD}$ commutes with $\hat{\xi}_{TD}$, all of the commutators in the formal series defined in Lemma \ref{lemma:formal_series} go to zero, leaving us with
\begin{equation}
    \dot{\hat{\xi}}_{TD} = -k \hat{\xi}_{TD}
\end{equation}
Which exponentially goes to zero.
\end{proof}

\subsection{Discussion} \label{sec:proof_remarks}
It should be noted that the control law presented here has a very intuitive geometric interpretation. The control input should be thought of as the sum of two terms: a feedback term $k \hat{\xi}_{TD}$ and a feedforward term $g_{TD} \hat V^b_{SD} g_{TD}^{-1}$. The term $\hat V^b_{SD}$ is simply the body velocity of the reference trajectory, which is the open loop control input to the system. If the system state is adequately tracking the reference trajectory, then it should evolve with this body velocity. The similarity transform $g_{TD} \hat V^b_{SD} g_{TD}^{-1}$ simply re-writes the body velocity to be in the system tool frame $T$. The feedback term $\hat\xi_{TD}$ is the log of $g_{TD}$, which is that velocity which, if executed for 1 second, will bring the instantaneous $T$ frame to the instantaneous $D$ frame. i.e. it is the velocity that will reduce the state error.

\subsection{Proof of Convergence for Discrete Time Control Law} \label{sec:discrete_proof}

In this section we present a proof of convergence for a discrete time version of the control law presented above. As above, let $G$ be a connected matrix Lie group and let $\mathfrak g$ be the associated Lie algebra. Let $g_{ST}(n)$ be the true system state in discrete time, and let $g_{SD}(n)$ be a reference trajectory. Let the discretization time-step be $\Delta t$. In particular, we assume there is some underlying continuous time system $\tilde g_{ST}(t)$ from which $g_{ST}(n)$ is sampled, so that $g_{ST}(n) = \tilde g_{ST}(n \Delta t)$.

\vspace{2mm}
Let $u(n) \in \mathfrak{g}_T$ be the control input. The system evolves according to
\begin{align}
    \label{eq:disc_system}
    g_{ST}(n + 1) =  g_{ST}(n) e^{u(n) \Delta t}
\end{align}
Additionally, let $\hat V^b_{SD}(n)$ be the discrete time body velocity of the reference trajectory. In other words, $\hat V^b_{SD}$ satisfies
\begin{align}
    \label{eq:disc_vel}
    g_{SD}(n + 1) = g_{SD}(n) e^{\hat V^b_{SD}(n) \Delta t}
\end{align}
We again define the configuration error $g_{TD}(n)$
\begin{equation}
    g_{TD}(n) = g_{ST}^{-1}(n) g_{SD}(n)
\end{equation}
and the state error $\hat{\xi}_{TD}(n)$
\begin{equation}
    \hat{\xi}_{TD}(n) = \log(g_{TD}(n))
\end{equation}
As above, we will drop the time-step dependence from our notation where convenient.


\begin{theorem}\label{thm:discrete_proof}
Let $k \in \mathbb{R}^+$ be a positive controller gain, with $k\Delta t < 2$. Then for a small enough discretization step $\Delta t$, a controller of the form
\begin{equation}
    u = k \hat{\xi}_{TD} + g_{TD} \hat{V}^b_{SD} g_{TD}^{-1}
\end{equation}
results in exponential trajectory tracking in the region where the exponential map $exp : \mathfrak{g} \rightarrow G$ is surjective. As in Theorem \ref{thm:global_continuous}, this results in global exponential tracking for many commonly used matrix Lie Groups, including $SE(n)$, $SO(n)$ and $GL(n, \mathbb{C})$.
\end{theorem}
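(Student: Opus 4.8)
The plan is to derive an exact one-step recursion for the configuration error and then read off a geometric contraction on its logarithm. First I would combine the state update \eqref{eq:disc_system} with the reference update \eqref{eq:disc_vel} to obtain
\begin{equation}
g_{TD}(n+1) = e^{-u\Delta t}\, g_{TD}\, e^{\hat V^b_{SD}\Delta t}. \nonumber
\end{equation}
Substituting the control law and using the conjugation identity $g_{TD}\,e^{\hat V^b_{SD}\Delta t} = e^{\Delta t\, g_{TD}\hat V^b_{SD}g_{TD}^{-1}}\,g_{TD}$, this becomes $g_{TD}(n+1) = e^{-\Delta t(A+B)}\,e^{\Delta t B}\,e^{\hat\xi_{TD}}$, where I abbreviate $A = k\hat\xi_{TD}$ and $B = g_{TD}\hat V^b_{SD}g_{TD}^{-1}$.

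Before estimating, I would record the exact equilibrium: when $\hat\xi_{TD}=0$ we have $g_{TD}=I$, so $B = \hat V^b_{SD}$ and the update collapses to $e^{-\Delta t \hat V^b_{SD}}e^{\Delta t\hat V^b_{SD}}=I$ for \emph{every} $\Delta t$. This is the discrete analogue of the feedforward cancellation in Theorem \ref{thm:global_continuous}, and it shows that zero tracking error is an exact fixed point, not merely one to leading order. Next I would apply the BCH formula twice — once to collapse $e^{-\Delta t(A+B)}e^{\Delta t B}$ and once to merge the result with $e^{\hat\xi_{TD}}$. Because $A$ is a scalar multiple of $\hat\xi_{TD}$, the bracket $[A,\hat\xi_{TD}]$ vanishes, so the $\Delta t^1$ contribution is exactly $-k\Delta t\,\hat\xi_{TD}$ and I obtain
\begin{equation}
\hat\xi_{TD}(n+1) = (1 - k\Delta t)\,\hat\xi_{TD} - \tfrac{k\Delta t^2}{2}[\hat\xi_{TD}, B] + \mathcal R, \nonumber
\end{equation}
with $\mathcal R = \mathcal O(\Delta t^3)$ collecting the remaining commutators.

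I would then bound the total deviation $\mathcal D := \hat\xi_{TD}(n+1) - (1 - k\Delta t)\hat\xi_{TD}$ from the linear prediction. Every term in $\mathcal D$ carries at least one factor of $A = k\hat\xi_{TD}$, consistent with the exact equilibrium above, so on any bounded sublevel set of the error and for a bounded reference velocity $\|\hat V^b_{SD}\|\le M$, smoothness of the composition gives $\|\mathcal D\| \le C\Delta t^2\,\|\hat\xi_{TD}\|$ for all $\Delta t \le \Delta t_0$, with $C$ depending only on $M$ and the sublevel set. This yields the contraction estimate $\|\hat\xi_{TD}(n+1)\| \le \big(|1-k\Delta t| + C\Delta t^2\big)\,\|\hat\xi_{TD}\|$. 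Since the hypothesis $k\Delta t<2$ forces $|1-k\Delta t|<1$, shrinking $\Delta t$ makes the bracketed factor a constant $\rho<1$; the error then decreases geometrically, in particular stays inside the surjective region where $\log$ is well defined, and $\|\hat\xi_{TD}(n)\|\le \rho^{\,n}\|\hat\xi_{TD}(0)\|$ gives the claimed exponential tracking.

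The main obstacle I anticipate is the rigorous control of the remainder $\mathcal D$, and in particular convergence of the BCH series over a single step: the series converges only when $\Delta t\|u\|$ and $\Delta t\|\hat V^b_{SD}\|$ are small, and $\|u\|$ grows with $\|\hat\xi_{TD}\|$. This is unproblematic on the compact groups $SO(n)$, $SE(n)$ and $SU(n)$, where the error norm is bounded on the surjective region so a single $\Delta t_0$ works uniformly; but on a non-compact group such as $GL(n,\mathbb C)$ the error — and hence the step size needed for convergence — is not uniformly controlled. Establishing the genuinely global claim there will require either restricting to a bounded initial error or a separate argument certifying that one admissible $\Delta t$ contracts the error along the entire trajectory. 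I expect the compact cases to follow directly from the contraction estimate above.
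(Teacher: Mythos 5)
Your proposal is correct in substance and follows the same skeleton as the paper's proof---derive the exact one-step recursion $g_{TD}(n+1) = e^{-u\Delta t}\,g_{TD}\,e^{\hat{V}^b_{SD}\Delta t}$, use the commutation of $\hat{\xi}_{TD}$ with $g_{TD}$ to pull the feedforward term through as a conjugation, and deduce geometric decay of $\hat{\xi}_{TD}$---but it diverges at the decisive analytical step, and in a way that strengthens the argument. The paper expands the product $e^{-(k\hat{\xi}_{TD}+\hat{V}^b_{SD})\Delta t}e^{\hat{V}^b_{SD}\Delta t}$ to first order in $\Delta t$, writes it as $\approx e^{-k\hat{\xi}_{TD}\Delta t}$, and from that point on treats the recursion $\hat{\xi}_{TD}(n+1)=(1-k\Delta t)\hat{\xi}_{TD}(n)$ as exact; the neglected $\mathcal{O}(\Delta t^2)$ terms are never bounded, even though the recursion is iterated over infinitely many steps. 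You instead keep the deviation $\mathcal{D}$ from the linear prediction, observe that every term of it carries a factor of $k\hat{\xi}_{TD}$ (consistent with your exact-fixed-point observation, which the paper never makes explicit), bound $\|\mathcal{D}\|\le C\Delta t^2\|\hat{\xi}_{TD}\|$ on a sublevel set, and conclude a genuine, forward-invariant contraction with factor $\rho = |1-k\Delta t|+C\Delta t^2<1$. That is what the hypothesis ``for a small enough $\Delta t$'' actually requires, so your version repairs the main weakness of the paper's proof. Your closing caveat is also well placed: the constant $C$ and admissible $\Delta t_0$ are uniform only where the error norm and reference velocity are bounded, so on non-compact groups the conclusion is semi-global (any bounded initial error, with $\Delta t_0$ depending on the bound) rather than global with a single step size---a limitation of the theorem as stated that the paper's truncation silently skips.

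Two repairs to your write-up, neither fatal. First, $SE(n)$ is not compact: the translational part of $\hat{\xi}_{TD}$ is unbounded on the surjective region, so $SE(n)$ belongs with $GL(n,\mathbb{C})$ in your bounded-initial-error case; only $SO(n)$ and $SU(n)$ give the uniform constant for free, and even there the smoothness constants of $\log$ degrade near its discontinuity locus (e.g.\ rotations by $\pi$), so ``uniform over the surjective region'' still needs care at the boundary. Second, your remainder $\mathcal{R}$ is not $\mathcal{O}(\Delta t^3)$: because $\hat{\xi}_{TD}$ is not small, merging $e^{W}$ with $e^{\hat{\xi}_{TD}}$ is governed by the linearization $W\mapsto \bigl(\mathrm{ad}_{\hat{\xi}_{TD}}/(e^{\mathrm{ad}_{\hat{\xi}_{TD}}}-1)\bigr)(W)$ rather than by the low-order BCH brackets alone, so the true $\mathcal{O}(\Delta t^2)$ term is the whole Bernoulli series of nested commutators $\mathrm{ad}_{\hat{\xi}_{TD}}^j[\hat{\xi}_{TD},B]$, not just $-\tfrac{k\Delta t^2}{2}[\hat{\xi}_{TD},B]$. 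This does not break your estimate---each such term still carries a factor of $\hat{\xi}_{TD}$ and is absorbed into $\|\mathcal{D}\|\le C\Delta t^2\|\hat{\xi}_{TD}\|$ on a sublevel set---but the displayed expansion should be corrected, and phrasing the argument through this operator (which acts as the identity on the span of $\hat{\xi}_{TD}$) is also the cleanest way to sidestep your worry about convergence of the BCH series for large $\hat{\xi}_{TD}$.
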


\begin{proof}
Consider the evolution of the configuration error $g_{TD}$.
\begin{align}
        g_{TD}(n+1) &= g_{ST}^{-1}(n+1) g_{SD}(n+1) \nonumber \\
                &= \left(g_{ST}(n)e^{u(n) \Delta t} \right) ^{-1} g_{SD}(n) e^{\hat{V}^b_{SD} \Delta t} \nonumber \\
                &= e^{-u(n) \Delta t} g_{TD}(n) e^{\hat{V}^b_{SD}\Delta t} \label{eq:g_TD(n+1)}
\end{align}
Now examine the control input $u(n)$
\begin{align}
    u &= k \hat{\xi}_{TD} + g_{TD} \hat{V}^b_{SD} g_{TD}^{-1} \nonumber \\
        &= g_{TD}\left(g_{TD}^{-1} k \hat{\xi}_{TD} g_{TD} + \hat{V}^b_{SD} \right) g_{TD}^{-1} \nonumber \\
        &= g_{TD}\left(k \hat{\xi}_{TD} g_{TD}^{-1} g_{TD} + \hat{V}^b_{SD} \right) g_{TD}^{-1} \nonumber \\
        &= g_{TD}\left(k \hat{\xi}_{TD} + \hat{V}^b_{SD} \right) g_{TD}^{-1}
\end{align}
Plugging this into \ref{eq:g_TD(n+1)} we get
\begin{equation}
    g_{TD}(n+1) 
    = g_{TD}(n) e^{-(k \hat{\xi}_{TD}(n) + \hat{V}^b_{TD}(n))\Delta t }e^{\hat{V}^b_{TD}(n) \Delta t } \label{eq:gTD(n+1)subu}
\end{equation}
Examining the final two terms, we have a product of exponentials. We expand the power series' for the matrix exponential, and neglect all the terms with order $\Delta t^2$ or higher.
\begin{gather}
        e^{-(k \hat{\xi}_{TD} + \hat{V}^b_{TD})\Delta t }e^{\hat{V}^b_{TD} \Delta t } = \nonumber \\
        \approx (I - k\hat{\xi}_{TD} \Delta t - \hat{V}^b_{TD} \Delta t)(I + \hat{V}^b_{TD} \Delta t) \nonumber \\
        \approx I - k \hat{\xi}_{TD} \Delta t \nonumber \\
        \approx e^{-k \hat{\xi}_{TD} \Delta t}
\end{gather}
Plugging this back into \ref{eq:gTD(n+1)subu} we get
\begin{gather}
    g_{TD}(n+1) = g_{TD}(n) e^{-k \hat{\xi}_{TD}(n) \Delta t} \nonumber \\
    e^{\hat{\xi}_{TD}(n+1)} = e^{\hat{\xi}_{TD}(n)} e^{-k \hat{\xi}_{TD}(n) \Delta t}
\end{gather}
Since the exponents on the right side commute, we have
\begin{align}
    \hat{\xi}_{TD}(n+1) = (1 - k \Delta t) \cdot \hat{\xi}_{TD}(n) \nonumber \\
    \implies \hat{\xi}_{TD}(n+1)= (1 - k \Delta t)^{n+1} \cdot \hat{\xi}_{TD}(0)
\end{align}
For a given small $\Delta t$, we can choose $k$ so that $|1 - k \Delta t| < 1$. Then $(1 - k\Delta t)^{n+1} \rightarrow 0$ as $n \rightarrow \infty$ and so $\hat \xi_e \rightarrow 0$ exponentially. Since $\hat \xi_e$ goes to zero, the error configuration $g_{TD} = e^{\hat \xi_{TD}}$ goes to the identity matrix, and hence $g_{ST} \rightarrow g_{SD}$ as needed.
\end{proof}

\section{Experimental Results} \label{sec:results}

\subsection{Simulation: Exponential Tracking in $SE(3)$}
We implement our controller in simulation on a system on $SE(3)$. The desired trajectory was a helical trajectory starting at the origin and progressing with constant body velocity $V^b = [0.5, 0.5, 0.3, 0.5, 0.3, 0.7]^T \in se(3)$. We used a control gain $k = 1$. The actual trajectory started at an arbitrary configuration with spectral radius $|g_{TD}(0) - I| > 1$. As shown in figures \ref{fig:se3_traj} and \ref{fig:se3_error}, the system exponentially converges to the desired trajectory.


\subsection{Simulation: Exponential Tracking in $SU(4)$}
We implement our controller in simulation on a system in the Special Unitary Group $SU(4)$. The Lie Algebra for this group $su(4)$ is the set of skew-hermitian matrices in $\mathbb{R}^{4 \times 4}$ \cite{hall2015lie}. The desired trajectory was a randomly determined trajectory with constant body velocity that started at the origin. We used a control gain $k = 1$. The actual trajectory started at an arbitrary configuration with spectral radius $|g_{TD}(0) - I| > 1$. As shown in figure \ref{fig:su4_error}, the system exponentially converges to the desired trajectory.


\subsection{Simulation: Exponential Tracking in $GL_0(4, \mathbb{R})$}

We implement our controller in simulation on a system in $GL_0(4, \mathbb{R})$, the subset of the General Linear Group on $\mathbb{R}$ with positive determinant. The Lie Algebra of $GL_0(4)$ is the set of matrices in $\mathbb{R}^{4x4}$ \cite{hall2015lie}. The desired trajectory started at the origin and was determined by randomly selecting a body velocity for each time step. We used a control gain $k = 1$. The actual trajectory started at an arbitrary configuration with spectral radius $|g_{TD}(0) - I| > 1$. As shown in figure \ref{fig:gl4_error}, the system exponentially converges to the desired trajectory.

\begin{figure}[bh]
    \centering
    \vspace{-0.5cm}
    \includegraphics[ width=\columnwidth]{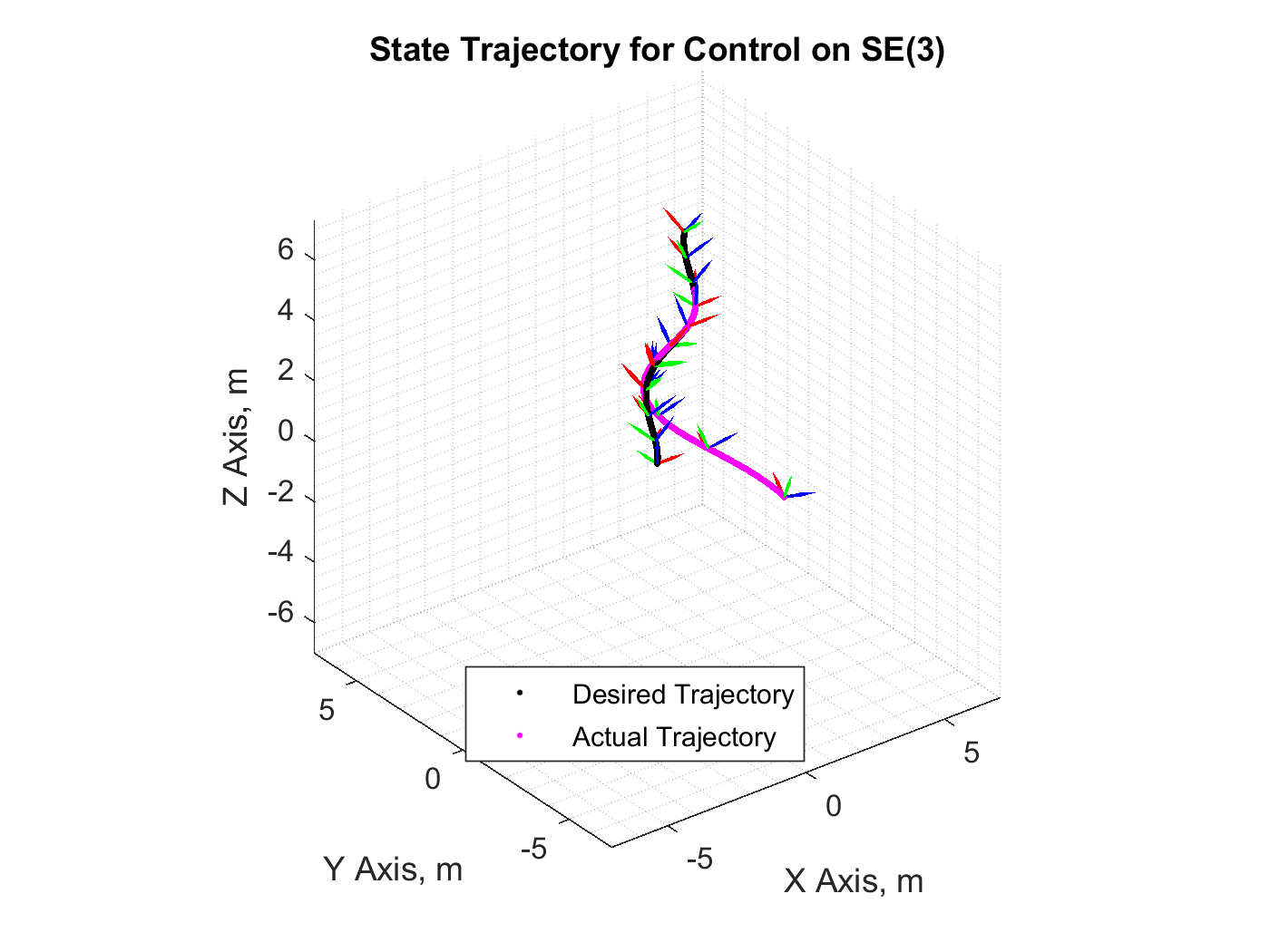}
    \vspace{-0.8cm}
    \caption{Visualization of the desired and actual trajectories for a simulated system which evolves on the Special Euclidean Group $SE(3)$. The desired trajectory starts at the identity element (the origin), while the actual trajectory begins at some offset. The actual trajectory is seen to converge to the desired trajectory.}
    \label{fig:se3_traj}
    \vspace{0cm}
\end{figure}


\subsection{Hardware Experiment: 7-DOF Sawyer Manipulator}
We implement our controller on a 7-DOF Sawyer robot arm. Here, the controller is implemented on $SE(3)$, and is used to do Cartesian velocity control of the end-effector. In particular, we use our control law to get the end-effector of the robot to track a trajectory $g_{SD}(t) \in SE(3)$. We find that this results in an easy to implement controller that requires minimal tuning and results in effective trajectory tracking.

The current state $g_{ST}(t) \in SE(3)$ of the end-effector is computed through the forward kinematics of the manipulator and used to compute the error $g_{TD}(t)$. We use the controller defined in Theorem \ref{thm:discrete_proof} to define the control input $u = \hat{V}_{ST}^b$, which is then transformed into a spatial velocity vector.
\begin{equation}
    V^s_{ST} = \left(g_{ST} \hat{V}_{ST}^b g_{ST}^{-1} \right)^\vee
\end{equation}
This spatial velocity is then converted to a joint-velocity input using the Moore-Penrose pseudoinverse of the manipulator Jacobian, which is then sent as a command to the robot. See \cite{MLS} for more details.
\begin{equation}
    \dot{\theta} = J_{ST}^{s\dagger}(\theta) V_{ST}^s
\end{equation}
We use system specifications from Sawyer's pre-calibrated URDF \cite{URDF} and the OROCOS Kinematics and Dynamics Library \cite{kdl-url} to compute the robot's forward kinematics and Jacobian.

The controller is tested against a helical trajectory where the desired orientation of the arm is kept in a fixed random configuration. The arm is initialized away from the trajectory, in a different position and orientation. In our experiment, we use a gain of $k = 1$. We find that under the presented control law, the end-effector quickly converges to the desired trajectory and then tracks the trajectory with negligible error in steady state. See figure \ref{fig:robot_xyz_error} for plots of desired and true cartesian position during the tracking task. We also plot an error metric $e(t) = ||g_{TD} - I||_F$, which shows that the configuration error converges exponentially to zero (figure \ref{fig:robot_g_error}).



\section{Conclusion}

By developing our control law in the exponential coordinates of the Lie group rather than on the group itself, we're able to sidestep the task of finding easily-differentiable Lyapunov functions, while guaranteeing globally exponential convergence on the most commonly-used Lie groups. We find that this control law is easily-implemented, and highly intuitive, so much so that we had our undergraduate robotics course implement it as a homework assignment.

We feel that the modified BCH formula seen in Lemma \ref{lemma:formal_series} is an interesting start to further research. Since many of the terms in the BCH formula go to zero in a limit, we may be able to find bounds or convergence properties when the control input does not commute with the error. In particular, we feel that this could be a way to try tackling robustness or underactuated control. Further work could also include representing a second order controller using exponential coordinates to compare with \cite{maithripala}.

\begin{figure}[thbp]
\centering
\vspace{-0.3cm}
\subfloat[\label{fig:robot_xyz_error}]{\includegraphics[width = 3.1in]{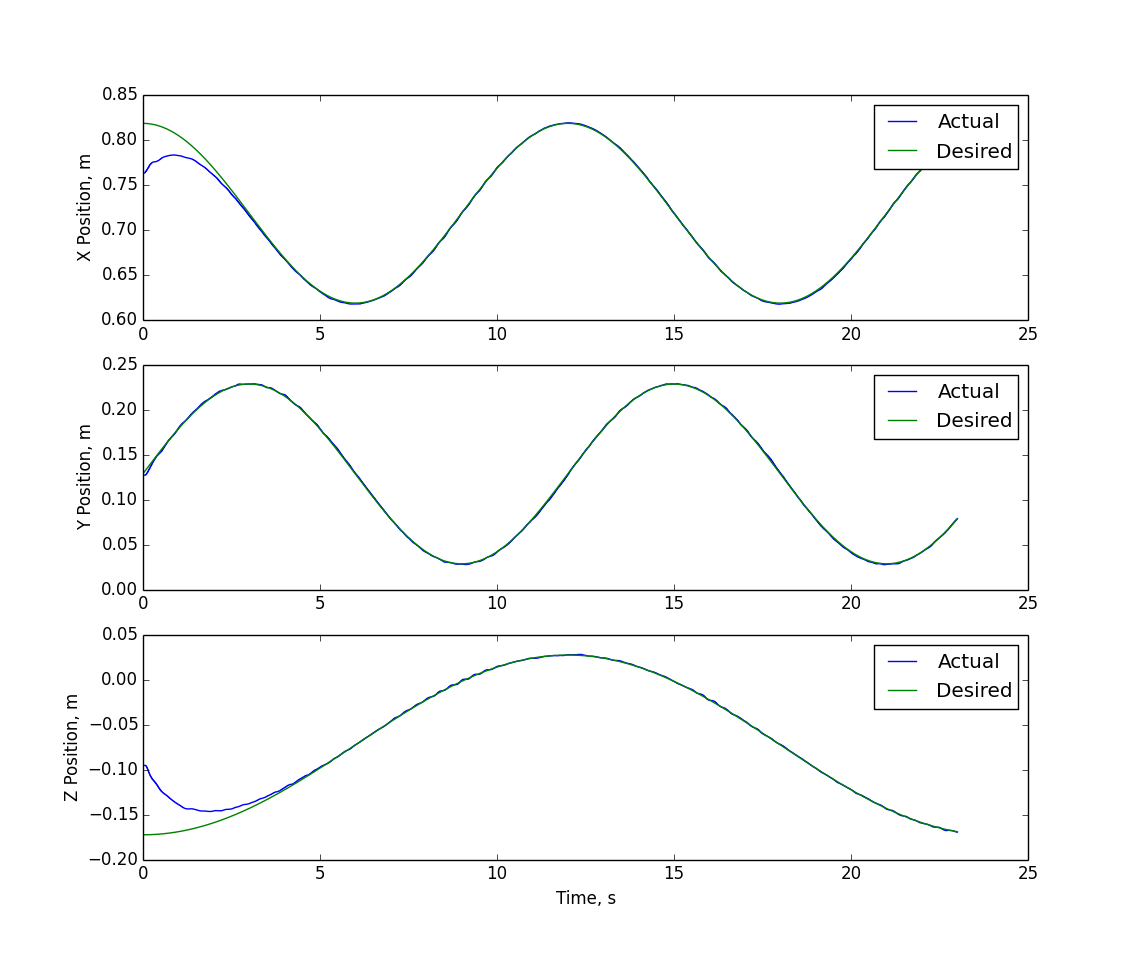}} \\
\vspace{-0.3cm}
\subfloat[\label{fig:robot_g_error}]{\includegraphics[width = 3.1in]{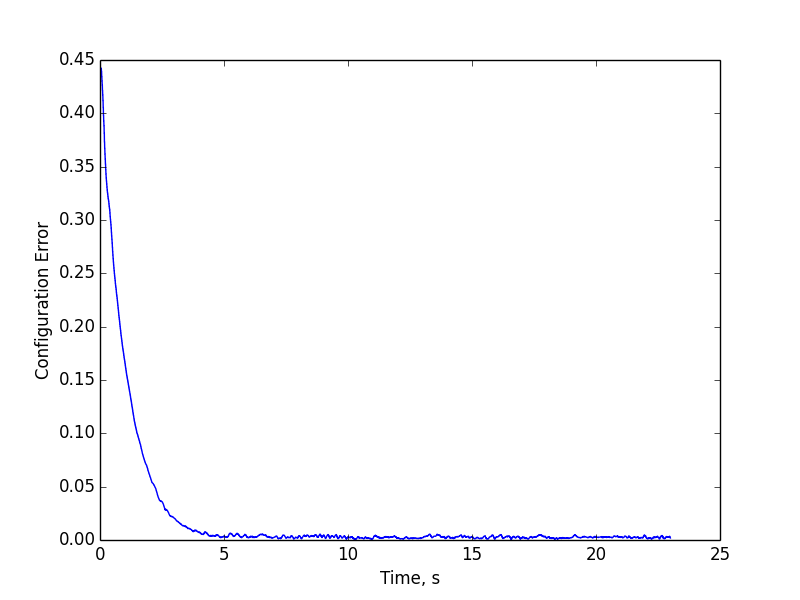}}
\caption{Performance of the Sawyer robot arm while tracking a helical reference trajectory using the presented controller on $SE(3)$ as a cartesian control scheme. (a) Comparison of reference cartesian position and true cartesian position of the Sawyer robot's end effector. (b) Error in the Sawyer robot's end effector configuration $g_{ST}(t) \in SE(3)$ as compared to the reference trajectory $g_{SD}(t)$. The configuration error metric is $e(t) = ||g_{TD} - I||_F$ where $||\cdot||_F$ is the Frobenius matrix norm. The error is seen to go to zero exponentially.}
\label{fig:robot_figures}
\vspace{-0.3cm}
\end{figure}    

\section{Acknowledgments}
We would like to thank Tyler Westenbroek, Joseph Menke, and Chih-Yuan Chiu for discussions and editing help. We would also like to thank the Spring 2020 class of EECS 106B/206B at UC Berkeley for debugging these algorithms in their homework assignments.

\bibliographystyle{./bibliography/IEEEtran}
\bibliography{./bibliography/IEEEabrv,./bibliography/references}

\end{document}